\newtheorem{definition}{Definition}
\newtheorem{proposition}{Proposition}
\newtheorem{example}{Example}
\newtheorem{corollary}{Corollary}
\newcommand{\dom}{\mathcal{D}}
\newcommand{\sourcedom}{\mathcal{D}^S}
\newcommand{\targetdom}{\mathcal{D}^T}
\newcommand{\reprspace}{\mathcal{R}}
\begin{document}

\copyrightyear{2022}
\copyrightclause{Copyright for this paper by its authors.
  Use permitted under Creative Commons License Attribution 4.0
  International (CC BY 4.0).}

\conference{IARML@IJCAI-ECAI'2022: Workshop on the Interactions between Analogical Reasoning and Machine Learning, at IJCAI-ECAI'2022,
  July, 2022, Vienna, Austria}

\title{Measuring the Feasibility of Analogical Transfer using Complexity}

\author[1]{Pierre-Alexandre Murena}[%
orcid=0000-0003-4586-9511,
email=pierre-alexandre.murena@aalto.fi,
url=http://pamurena.com/,
]
\fnmark[1]
\address[1]{Helsinki Institute for Information Technology HIIT, Department of Computer Science, Aalto University,
  Espoo, Finland}


\begin{abstract}
Analogies are 4-ary relations of the form ``A is to B as C is to D". While focus has been mostly on how to solve an analogy, i.e. how to find correct values of D given A, B and C, less attention has been drawn on whether solving such an analogy was actually feasible. In this paper, we propose a quantification of the transferability of a source case (A and B) to solve a target problem C. This quantification is based on a complexity minimization principle which has been demonstrated to be efficient for solving analogies. We illustrate these notions on morphological analogies and show its connections with machine learning, and in particular with Unsupervised Domain Adaptation. 
\end{abstract}

\begin{keywords}
  Analogical reasoning \sep
  Analogical transfer \sep
  Minimum Message Length \sep
  Domain adaptation
\end{keywords}

\maketitle

\section{Introduction}

Analogies are 4-ary relations of the form ``$A$ is to $B$ as $C$ is to $D$", denoted $A : B :: C : D$. Even though humans demonstrate strong capabilities of understanding and generating analogies, which has been intensively studied by cognitive sciences~\cite{GentnerH17}, these tasks are much more difficult for a machine. In particular, an important task consists in solving analogical equations: given $A, B$ and $C$, find $D$ such that $A : B :: C : D$ is a valid analogy. Solving such equations has been investigated in multiple domains: Boolean domains~\cite{MicletPrade}, formal concepts~\cite{BarbotMP19}, structured character strings~\cite{Hofstadter:1995:CPM:218753.218767}, semantic~\cite{mikolov2013linguistic,LimPR21} or morphological tasks~\cite{Lepage01}.

In most cases, the aforementioned methods are designed to provide an answer to any equation $A : B :: C : x$, regardless of whether the equation makes sense. This is not the case for humans, who consider that some analogies make more sense than others. 
Consider for instance the domain of Hofstadter analogies~\cite{Hofstadter:1995:CPM:218753.218767}. It describes analogies between character strings, with strong domain constraints relative to the order of the alphabet. For instance, the analogy ``ABC : ABD :: IJK : IJL", which is a typical illustrative example of this domain, is based on the notions of \emph{increment} and \emph{last element}.  
Intuitively, not all analogical equations have a solution in this domain: for instance, it is difficult for a human to find a satisfying solution to the equation~``ABC : HIC :: BFQ : $x$". This is confirmed by the results of the user study conducted by Murena et al.~\cite{murena2017complexity}.

Which analogical equations are indeed solvable, is a rather infrequently discussed question. However, it would have strong implications, be it in practical applications of analogies (e.g. in intelligent tutoring systems~\cite{murena2021inferring}) or in transfer learning. 
In this paper, we propose a first step toward this important direction, by considering how to measure the transferability of a source case $(A, B)$ to a target case $(C, D)$. To do so, we propose a very general formalization of the problem, which is shown to apply to transfer in both symbolic tasks (like morphological analogies) and numerical machine learning. Based on this formalization, and getting inspiration from applications of Kolmogorov complexity in inference~\cite{li2008introduction}, we propose several potential definitions of transferability and discuss their main properties.

\section{Preliminary Notions}
\label{sec:preliminary}

\subsection{Domains and Model Spaces}

Although various definitions of analogy have been proposed in the literature, we consider in this paper the following definitions, inspired by the recent framework of Antic~\cite{Antic2020}.

A domain~$\mathcal{D}$ is defined as the product of two spaces~$\mathcal{X}$ (the \emph{problem space}) and~$\mathcal{Y}$ (the \emph{solution space}). An element~$(x, y) \in \dom$ will be referred to as a \emph{case}.  

We introduce a set~$\mathcal{R}$ called \emph{representation space}. 
A model space~$\mathbb{M}_{\reprspace,\dom}$ of domain~$\dom$ based on representation~$\reprspace$ is defined as a subset $\mathbb{M}_{\reprspace,\dom} \subseteq \lbrace f: \mathcal{X} \times \mathcal{Y} \times \mathcal{R} \rightarrow [0,1] \rbrace$ of functions mapping a problem $x \in \mathcal{X}$, a solution $y \in \mathcal{Y}$ and a representation $r \in \mathcal{R}$, to a real number. Any $M \in \mathbb{M}_\dom$ is called a model of~$\dom$. When the context is clear, we will use the notation~$\mathbb{M}$ instead of $\mathbb{M}_{\reprspace,\dom}$. 


We illustrate these notions with two examples that will be further investigated in Sections~\ref{sec:morphological-analogies} and \ref{sec:uda}. 

\begin{example}[Recursive model for morphology]
\label{ex:morphology-model}
Given an alphabet~$\mathcal{A}$, we define by~$\mathcal{A}^*$ the set of words of~$\mathcal{A}$. 
The morphological domain consists of two forms of a word (e.g. declension of a word or conjugation of a verb in natural language): therefore, it is given by~$\dom = \mathcal{A}^* \times \mathcal{A}^*$.

We define the representation space~$\reprspace = \bigcup_{n=1}^\infty (\mathcal{A}^*)^n$. Let $\mathcal{F}$ be the space of all recursive functions~\footnote{i.e. functions computable by a Turing machine. See Section~\ref{sec:kolmogorov}. } from $\reprspace$ to $\dom$. For all~$\phi \in \mathcal{F}$, we define $M_\phi: \mathcal{A}^* \times \mathcal{A}^* \times \reprspace  \rightarrow \lbrace 0, 1 \rbrace$ as:
\begin{equation}
    M_\phi(x, y, r) = \begin{cases} 1 & \text{if } \phi(r) = (x, y) \\ 0 & \text{otherwise} \end{cases}
    \label{eq:morpho-model}
\end{equation}
We can then define the model space $\mathbb{M}_{\reprspace,\dom}$ as:
\begin{equation}
    \mathbb{M}_{\reprspace,\dom} = \left\lbrace M_\phi | \phi \in \mathcal{F} \right\rbrace
\end{equation}
\end{example}

This example has an easy interpretation. The morphological domain consists of two words $w_1$ and $w_2$, which are typically two flections of a same word. For instance, the tuple ``play : played" describes the flection of the English verb ``play" from the present tense to the perfect tense. Similarly, the tuple ``taloon : talossa" describes the flection of the Finnish noun ``talo" from the illative case to the inessive case. 

For a better readability, we decompose the recursive function~$\phi \in \mathcal{F}$ as $\phi(r) = (\phi_1(r), \phi_2(r))$. The function $\phi_1$ (resp. $\phi_2$) describes how the word $w_1$ (resp. $w_2)$ is formed based on some representation~$r$. For instance, in the ``play : played" example, we can have $\phi_1(r) = r$, $\phi_2(r) = r + \text{``ed"}$ (where the $+$ operation is the string concatenation), and both functions are instantiated with $r = \text{ ``play"}$.

\begin{example}[Probabilistic models on $\mathbb{R}^d$]
\label{ex:probabilistic-model}
We define the binary domain~$\dom = \mathcal{X} \times \mathcal{Y}$ with problem space~$\mathcal{X} = (\mathbb{R}^d)^*$ and solution space~$\mathcal{Y} = \mathcal{L}^*$, for some space~$\mathcal{L}$ of labels. When $\mathcal{L}$ is discrete, the problem is called \emph{classification}, otherwise \emph{regression}.  
An observation on domain~$\dom$ consists of one labelled dataset, where the problem is the unlabeled dataset (points in~$\mathbb{R}^d$) and the labels (in $\mathcal{L}$).

Let $\mathcal{P}$ be a set $\mathcal{P}_\Theta = \lbrace p_\theta: \mathcal{X} \times \mathcal{Y} \rightarrow [0,1] | \theta \in \Theta \rbrace$ of probability density functions on $\dom$ parameterized by~$\theta \in \Theta$. Fixing $\reprspace = \emptyset$ and therefore identifying $M(x, y, r)$ to $f(x,y)$, we can define the model space~$\mathbb{M}_{\reprspace,\dom} = \mathcal{P}_\Theta$. 
\end{example}

In this paper, we assume that all considered quantities are computable. For those which are not computable (for instance domains of real numbers used in Example~\ref{ex:morphology-model}), we will introduce relevant computable approximations.

\subsection{Kolmogorov Complexity}
\label{sec:kolmogorov}

Our framework relies on the use of Kolmogorov complexity~\cite{li2008introduction}. We propose a gentle introduction to this notion. In the following, we will use the notation~$\mathbb{B}$ to designate the binary set $\lbrace 0, 1 \rbrace$. 

A function $\phi: \mathbb{B}^* \rightarrow \mathbb{B}^*$ is called partial recursive if its output $\phi(p)$ corresponds to the output of a given Turing machine after its execution with input~$p$ when it halts (otherwise, we use the convention $\phi(p) = \infty$). With this notation, the function~$\phi$ can be improperly likened to a Turing machine. In this case, the input~$p$ is called a \textit{program}. 
A partial recursive function~$\phi$ is called \textit{prefix} if, for all~$p, q \in \mathbb{B}^*$, if $\phi(p) < \infty$ and $\phi(q) < \infty$, then $p$ is not a proper prefix of $q$. 

Complexity~$K_{\phi}(x)$ of a string $x \in \mathbb{B}^*$, relative to a partial recursive prefix (p.r.p.) function~$\phi$, is defined as the length of the shortest string $p$ such that $\phi(p) = x$: 
\begin{equation}
K_{\phi}(x) = \min_{p \in \mathbb{B}^*} \lbrace l(p) : \phi(p) = x \rbrace
\end{equation}
where $l(p)$ represents the length of the string $p$. 

A key result of the theory of complexity is the existence of an additively optimal p.r.p. function~$\phi_0$: for any p.r.p. function~$\phi$, there exists a constant~$c_{\phi}$ such that for all $x \in \mathbb{B}^*$, $K_{\phi_0}(x) \leq K_{\phi}(x) + c_{\phi}$. 
These additively optimal p.r.p. functions are used to define Kolmogorov complexity. They present in particular invariance properties, which means that the difference between the complexities defined by two distinct universal p.r.p. functions is bounded. However, it can be shown that Kolmogorov complexity is not computable, and thus cannot be used in practice.

In practice, this limitation is overcome by fixing a reference p.r.p. function which is not optimal but leads to a computable complexity. By definition, this non-optimal complexity is an upper-bound of Kolmogorov complexity (up to an additive constant).
This choice of a reference function is particularly restrictive and imposes some biases, which is inherent to any inductive problem. 
In the following, we will refer to this upper-bound either as complexity or as description length.

\subsection{Inference on a Single Domain}

Consider a domain~$\dom = \mathcal{X} \times \mathcal{Y}$ and an observation $(x, y) \in \dom$. Given a model space~$\mathbb{M}_\dom$, the standard \emph{inference} task of supervised learning is to select a model $M \in \mathbb{M}_\dom$ which optimally describes the observation. 

Selecting a model which accurately describes the observation is not enough in general: In most situations there exists multiple such models, and sometimes even infinitely-many. It is then necessary to discriminate among all these models, in particular based on the intended use of the model. In statistical learning for instance, the discrimination is done by both the selection of a \emph{simple} model family and a \emph{penalization} of models~\cite{shalev2014understanding}. 

The inference of the model describing an observed case can then be split into two aspects: selection of a model accurately describing the case and penalization over the model space. This idea has been formalized by Algorithmic Information Theory using Kolmogorov complexity. The \emph{Minimum Message Length} (MML)~\cite{wallace1968information} and the \emph{crude Minimum Description Length} (MDL)~\cite{rissanen1978modeling} principles both formulate the general inference task over a domain as the following minimization problem: 
\begin{equation}
    \underset{M \in \mathbb{M}_\dom}{\text{minimize}} \qquad K(M) + K(x,y | M)
    \label{eqn:mdl-onedomain}
\end{equation}
This two-part objective function corresponds to the trade-off between the accuracy of the model and its simplicity. For instance, in the morphological domain, it is possible to define a model accounting for all possible valid transformations. By construction, this model will have perfect accuracy, but it will require to encode every pair of problems and solutions in the language, and therefore will be particularly complex.

We remind the computation of complexity is relative to a choice of a reference Turing machine. Therefore, this choice imposes a strong bias over the intended outcome. A \emph{refined} version of the MDL principle has been proposed, which overcomes this limitation. This version is beyond the scope of this paper, but we refer the interested reader to (Gr\"{u}nwald, 2007)~\cite{grunwald2007minimum}.

\section{A Definition of Transferability}
\label{sec:transferability}

Analogies involve two separate domains: a \emph{source domain}~$\sourcedom$ and a \emph{target domain}~$\targetdom$. Given a source problem-solution pair $(x^S, y^S) \in \sourcedom$ and a target problem $x^T \in \mathcal{X}^T$, the analogical transfer from $(x^S,y^S)$ to $x^T$ is informally defined as finding $y^T \in \mathcal{Y}^T$ such that the transformation $x^S \mapsto y^S$ is ``similar" to the transformation $x^T \mapsto y^T$. This notion of similarity is problematic, especially when the source and target domains are distinct. Existing frameworks of analogy often assume that the source and target domains are the same~\cite{MicletPrade}, or at least share a common structure (e.g. are $L$-algebras of a same language~$L$, such as proposed by (Antic, 2020)~\cite{Antic2020}).

In this section, we show how complexity can be used to properly define this similarity, even in the case of distinct domains. We will then show that this definition helps quantifying the notion of \emph{transferability}, i.e. how the source observation $(x^S,y^S)$ is useful to find a solution to target problem~$x^T$.

\subsection{Inference of a Target Model}

The task in the target domain consists in predicting the solution~$y^T \in \mathcal{Y}^T$ associated to the problem~$x^T \in \mathcal{X}^T$. Usually, this is done throughout a model~$M$, in particular by finding $y^T \in \mathcal{Y}^T$ and $r \in \reprspace$ maximizing the score $M(x^T,y^T,r)$:
\begin{equation}
    y^{T*} \in \text{arg}\max_{y \in \mathcal{Y}^T} \left\lbrace \max_{r \in \mathcal{R}} M(x^T, y, r) \right\rbrace
     \label{eq:model-based-prediction}
\end{equation}
In the context of Example~\ref{ex:morphology-model}, this corresponds to choosing a representation $r$ that successfully describes~$x^T$ given the recursive function~$\phi$ associated to~$M$, i.e. finding $r$ such that $\phi_1(r) = x^T$. The solution~$y^T$ is then estimated by taking $y^T = \phi_2(r)$. In the context of Example~\ref{ex:probabilistic-model}, Equation~\eqref{eq:model-based-prediction} corresponds to taking the most probable solution.  

However, in practice, the model~$M$ is not known and needs to be inferred. The difficulty is that, in general, it is not possible to identify~$M$ given $x^T$ only. Even worse, there is no guarantee that two models correctly describing~$x^T$ can yield the same values of~$y^T$. For instance, in the morphological domain (Example~\ref{ex:morphology-model}), one can build degenerate functions~$\phi$ such that $\phi(r) = x^T$ for all $r$. All such functions successfully describe~$x^T$ and yield all possible values for~$ y^T$. 

The MML principle presented in Equation~\ref{eq:model-based-prediction} is meant to be used in a single domain, in particular the source domain. We propose to use it as well to estimate the target model $M^T$. However, in the context of an analogy, some additional information is provided by the observation of the source domain, where a model~$M^S$ can be evaluated from observations~$(x^S, y^S) \in \sourcedom$. The target model~$M^T$ can then be described with regards to the source model~$M^S$. 

The limitation of such a relative description of models is that the source model may not be relevant. In the following, we propose to quantify this relevance with two measures of model reusability,

\subsection{Reusability of a Source Model}

We measure the reusability of a source model to its ability to compress the information about the target domain. We identify two main possible definitions, that we call \emph{weak} and \emph{strong} reusability, and will show that a strongly reusable model is necessarily weakly reusable. 

The main criterion for \emph{weak reusability} is that knowing the source model~$M^S$ makes the target inference better, in the sense that it compresses more the description of the target case~$(x^T, y^T) \in \targetdom$. 
 
\begin{definition}[Weak reusability]
Let $\eta > 0$.
A model~$M^S \in \mathbb{M}_{\mathcal{R}^S, \dom^S}$ is called \emph{weakly $\eta$-reusable} for case $(x^T, y^T)$ in target model space~$\mathbb{M}_{\mathcal{R}^T, \dom^T}$ if:
\begin{align}
\min_{M \in \mathbb{M}_{\mathcal{R}^T, \dom^T}} \lbrace & K(M) + K(x^T, y^T | M) \rbrace \nonumber \\
& \geq \eta + \min_{M \in \mathbb{M}_{\mathcal{R}^T, \dom^T}} \lbrace K(M | M^S) + K(x^T, y^T | M) \rbrace \label{eq::weak-reusability}
\end{align}
\label{def:weak-reusability}
\end{definition}

It is essential to keep in mind that this definition is relative to the choice of a model space~$\mathbb{M}_{\mathcal{R}^T, \dom^T}$ for the target domain. A source model~$M^S$ may not be reusable for a case $(x^T, y^T)$ depending on the chosen target model space. In the following, we will abusively omit to mention the target model space, for readability purposes. 

Note that the models~$M$ defined in the left-hand side and in the right-hand side of inequality~\eqref{eq::weak-reusability} are not the same. On the left-hand-side, the model corresponds to the optimal target model describing~$(x^T, y^T)$ while, on the right-hand-side, it is the optimal model describing~$(x^T, y^T)$ when $M^S$ is known. 

This notion of reusability means that providing the source model~$M_S$ helps finding a new description of the problem shorter than the optimal description by~$\eta$ bits. 
In particular, in the case where $\mathbb{M}_{\mathcal{R}^S, \dom^S} = \mathbb{M}_{\mathcal{R}^T, \dom^T}$, the optimal model for $(x^T, y^T)$ (i.e. the model that minimizes $K(M) + K(x^T, y^T|M)$) is trivially weakly reusable for $(x^T, y^T)$ for all~$\eta$. In other words, when the optimal model for the source domain is also optimal for the target domain, then it is obviously reusable for the target domain. The interesting case will be when the optimal source model is not optimal for the target domain. 

We notice that the definition does not require $\sourcedom = \targetdom$ and aims to quantify the reusability of a model even for a completely different task. This is possible since complexity only requires to have computable models: Indeed, the term $K(M | M^S)$ is defined as long as the models are computable. 
In practice, the issue of comparing objects of different nature is hidden within the choice of the reference p.r.p. function for complexity (see Section~\ref{sec:kolmogorov}). 

\medskip

We propose an alternative definition of reusability, called \emph{strong reusability}. 
It is based on the idea that $M_S$ is reuseable if it helps compressing the optimal model of target case~$(x^T, y^T)$. Unlike previous definition, $M_S$ is not directly involved in the description of~$(x^T, y^T)$ though. 

\begin{definition}[Strong reusability]
Let $\eta > 0$.
A model~$M^S \in \mathbb{M}_{\mathcal{R}^S, \dom^S}$ is called \emph{strongly $\eta$-reusable} for case $(x^T, y^T)$ in target model space~$\mathbb{M}_{\mathcal{R}^T, \dom^T}$ if:
\begin{align}
M \in \underset{M \in \mathbb{M}_{\mathcal{R}^T, \dom^T}}{\text{arg} \min} \lbrace & K(M) + K(x^T, y^T | M) \rbrace \nonumber \\
& \Longrightarrow K(M) \geq K(M | M^S) + \eta \label{eq:strong-reusability}
\end{align}

\label{def:strong-reusability}
\end{definition}

This definition also relies on the choice of a target model space. As for weak reusability, we will abusively omit the model space in the following notations. 

Strong reusability is an extremely strong property of a source model. Indeed, it would be a natural property that, in case~$\mathbb{M}_{\mathcal{R}^S, \dom^S} = \mathbb{M}_{\mathcal{R}^T, \dom^T}$, any model minimizing $K(M) + K(x^T, y^T | M)$ is reusable to~$(x^T, y^T)$. This is not the case for strong reusability: indeed, the definition implies that \emph{any} model minimizing $K(M) + K(x^T | M)$ can be compressed given $M^S$, not only $M^S$ itself. Note that this could be alleviated by weakening Definition~\ref{def:strong-reusability} and requiring only the existence of a model $M$ minimizing $K(M) + K(x^T, y^T | M)$ and such that $K(M) \geq K(M | M^S) + \eta$.

\subsection{Properties of Reusability}

We now present basic properties of these two notions of reusability. All the presented properties follow directly from the definitions.

The first property applies to both weakly and strongly reusable models: it states that the threshold~$\eta$ is not unique. The proof of this proposition is trivial and omitted. 

\begin{proposition}
Let~$M_S \in \mathbb{M}_{\mathcal{R}^S, \dom^S}$ and $(x^T, y^T) \in \targetdom$. If $M^S$ is $\eta$-reusable for $(x^T, y^T)$, then it is also $\eta^\prime$-reusable for $(x^T, y^T)$ for all $\eta^\prime \leq \eta$.
In the following, we will call \emph{degree of reusability} of $M^S$ to $(x^T, y^T)$ the quantity:
\begin{equation}
    \rho(M^S, (x^T, y^T)) = \max \left\lbrace \eta \  ; \ M^S \text{ is } \eta \text{-reusable for } (x^T, y^T) \right\rbrace
    \label{eq:degree-reusability}
\end{equation}
\label{prop:reusability-monotonicity}
\end{proposition}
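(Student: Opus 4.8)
The plan is to prove the monotonicity statement separately for the two notions of reusability; the argument is the same in both cases. Each defining condition has the shape ``$X \ge \eta + Y$'', where neither $X$ nor $Y$ depends on~$\eta$: in the weak case $X$ and $Y$ are, respectively, the left-hand and right-hand minima of~\eqref{eq::weak-reusability}; in the strong case the inequality $K(M) \ge \eta + K(M\mid M^S)$ is required to hold simultaneously for every minimiser~$M$ of $K(M) + K(x^T, y^T \mid M)$, so there $X = K(M)$ and $Y = K(M\mid M^S)$ for each such~$M$. First I would observe that if such a condition holds for~$\eta$ and $\eta' \le \eta$, then $\eta' + Y \le \eta + Y \le X$, so the condition also holds for~$\eta'$. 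Applied with the weak, resp. strong, condition this gives the first assertion of the proposition.

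For the definition of the degree of reusability I would rearrange the defining condition into the equivalent form ``$\eta \le \delta$''. For weak reusability, $\delta = \delta_{\mathrm w}$ is the difference between the left-hand and right-hand minima of~\eqref{eq::weak-reusability}; for strong reusability, $\delta = \delta_{\mathrm s} = \min\{K(M) - K(M\mid M^S)\}$, the minimum being taken over the (nonempty) set of minimisers~$M$ of $K(M) + K(x^T, y^T\mid M)$. In both cases the set $\{\eta > 0 : M^S \text{ is }\eta\text{-reusable for }(x^T,y^T)\}$ is exactly $(0, \delta]$ when $\delta > 0$ and is empty otherwise. Since the complexities appearing in the definitions are finite --- the notation ``$\min$'' in Definitions~\ref{def:weak-reusability} and~\ref{def:strong-reusability} already presupposes their attainment, and for $\delta_{\mathrm s}$ one uses in addition that an infimum of a bounded-below set of integers is attained --- $\delta$ is a finite real number; and because the inequality $\eta \le \delta$ is non-strict, the set $(0, \delta]$ contains its supremum~$\delta$. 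Hence the maximum in~\eqref{eq:degree-reusability} is attained and $\rho(M^S,(x^T,y^T)) = \delta$ whenever $\delta > 0$.

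I do not expect a genuine obstacle: the mathematical content is the two one-line observations above. The only point that deserves care --- and it is a matter of convention rather than of proof --- is the degenerate case $\delta \le 0$, in which $M^S$ is not $\eta$-reusable for any positive~$\eta$ and the maximum in~\eqref{eq:degree-reusability} is taken over the empty set; there one would either declare $\rho$ undefined or set it to~$0$. It is also worth stating explicitly, when writing the proof, that the ``$\min$'' over the target model space is assumed to exist, so that $\delta_{\mathrm w}$ and $\delta_{\mathrm s}$ are well-defined real numbers.
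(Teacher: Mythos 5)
Your proof is correct; the paper itself omits the proof as ``trivial'', and your one-line monotonicity observation (the condition has the form $X \geq \eta + Y$ with $X$ and $Y$ independent of $\eta$, so it persists for any $\eta' \leq \eta$) is exactly the intended argument for both the weak and strong cases. Your additional care about why the maximum in the definition of $\rho$ is attained, and about the degenerate case where no positive $\eta$ works, goes slightly beyond what the paper records but is a sensible and correct clarification.
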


However, we insist on the fact that the degree of reusability is relative to a chosen target model space. It also depends on which of weak or strong reusability is considered. 
This dependency would not exist if these two notions of resuability were equivalent, which we will see is not the case. We will use the notation $\rho_w$ and $\rho_s$ to specify between the weak and strong cases.

We now show that strong reusability implies weak reusability: 

\begin{proposition}
Let $M^S \in \mathbb{M}_{\mathcal{R}^S, \dom^S}$ a source problem, a target case $(x^T, y^T) \in \targetdom$ and $\eta > 0$.
If $M^S$ is strongly $\eta$-reusable for $(x^T, y^T)$, then $M^S$ is also weakly $\eta$-reusable for $(x^T, y^T)$. 
\label{prop:correlation-reusability}
\end{proposition}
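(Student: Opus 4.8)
The plan is to exploit the fact that the quantity appearing on the left-hand side of the weak-reusability inequality~\eqref{eq::weak-reusability} is realized by \emph{some} minimizer $M^*$, and that this same $M^*$ is a candidate in the minimization on the right-hand side. Concretely, I would first invoke the existence of a minimizer: since the objective $K(M) + K(x^T, y^T \mid M)$ takes values in $\mathbb{N} \cup \{\infty\}$, it attains its infimum, so we may pick $M^* \in \arg\min_{M \in \mathbb{M}_{\mathcal{R}^T, \dom^T}} \{ K(M) + K(x^T, y^T \mid M) \}$. This $M^*$ is exactly the object to which the hypothesis of strong $\eta$-reusability applies.

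Next I would apply Definition~\ref{def:strong-reusability} to $M^*$: because $M^S$ is strongly $\eta$-reusable for $(x^T, y^T)$ and $M^*$ lies in the relevant $\arg\min$, we obtain $K(M^*) \geq K(M^* \mid M^S) + \eta$. Adding $K(x^T, y^T \mid M^*)$ to both sides gives
\begin{equation*}
K(M^*) + K(x^T, y^T \mid M^*) \;\geq\; \eta + K(M^* \mid M^S) + K(x^T, y^T \mid M^*).
\end{equation*}
Finally, since $M^*$ is merely one particular element of $\mathbb{M}_{\mathcal{R}^T, \dom^T}$, the right-most sum $K(M^* \mid M^S) + K(x^T, y^T \mid M^*)$ is bounded below by $\min_{M} \{ K(M \mid M^S) + K(x^T, y^T \mid M) \}$, while the left-hand side equals $\min_M \{ K(M) + K(x^T, y^T \mid M) \}$ by the choice of $M^*$. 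Chaining these yields precisely inequality~\eqref{eq::weak-reusability}, i.e. weak $\eta$-reusability.

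I do not anticipate a genuine obstacle here: the argument is a one-line specialization followed by a monotonicity-of-minimum step. The only point that deserves a sentence of care is the existence of the minimizer $M^*$ (so that the $\arg\min$ in Definition~\ref{def:strong-reusability} is nonempty and the left-hand side of~\eqref{eq::weak-reusability} is actually achieved); this is immediate from the integrality and non-negativity of Kolmogorov complexity. If one preferred to avoid even that, one could instead work with a near-minimizer achieving the infimum up to an arbitrarily small slack and pass to the limit, but since the relevant quantities are integers this is unnecessary.
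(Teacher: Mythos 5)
Your argument is correct and is essentially identical to the paper's own proof: both pick an optimal model $M^*$, apply the strong-reusability inequality $K(M^*) \geq K(M^* \mid M^S) + \eta$, and then lower-bound $K(M^* \mid M^S) + K(x^T, y^T \mid M^*)$ by the minimum over all models. Your added remark on the existence of the minimizer (via integrality and non-negativity of complexity) is a small point of care the paper leaves implicit.
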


\begin{proof}
We call~$M^* \in \mathbb{M}_{\mathcal{R}^T, \dom^T}$ an optimal model for $(x^T, y^T)$: $M^* \in \text{arg} \min_M \lbrace K(M) + K(x^T, y^T | M) \rbrace$. 
Under the assumptions of the proposition, it follows that:
\begin{align*}
\min_M \lbrace &K(M) + K(x^T, y^T | M) \rbrace \\
& = K(M^*) + K(x^T, y^T | M^*) \\
&\geq K(M^* | M^S) + \eta + K(x^T, y^T | M^*) \\
&\geq \min_M \lbrace K(M | M^S) + K(x^T, y^T | M) + \eta \rbrace
\end{align*}
which proves the proposition.
\end{proof}

However, the converse is not true: weak reusability does not imply strong reusability. This is the consequence of the fact that the models implied in Equation~\eqref{eq::weak-reusability} are not the same on the right hand side and on the left-hand side. We illustrate this with a simple example.

\begin{example}
\label{ex:weak-notstrong}
We consider a target model space made up of two distinct models: $\mathbb{M}_{\mathcal{R}^T, \dom^T} = \lbrace M_1, M_2 \rbrace$. We assume the following properties:
\begin{itemize}
    \item $M_1$ and $M_2$ describe equally well case~$(x^T, y^T)$, in the sense that $K(x^T, y^T | M_1) = K(x^T, y^T | M_2)$
    \item Model $M_1$ is more complex than model~$M_2$: $K(M_1) > K(M_2)$
    \item Model $M_1$ is easily described by source model~$M^S$: for simplicity, we can take~$K(M_1 | M^S) = 0$
    \item Model $M_2$ is not well described by source model~$M^S$: for simplicity, we can take~$K(M_2 | M^S) = K(M_2)$
\end{itemize}

With these assumptions, it can be easily verified that $M^S$ is weakly $\eta$-reusable for~$(x^T, y^T)$, with $\eta \leq K(M_2)$. However, $M^S$ is not strongly $\eta$-reusable for~$(x^T, y^T)$, since $M_2$ minimizes $K(M) + K(x^T, y^T | M)$ but $K(M_2) < K(M_2) + \eta = K(M_2 | M^S) + \eta$. 

\end{example}

Putting together the results of Propositions~\ref{prop:reusability-monotonicity} and \ref{prop:correlation-reusability}, as well as the counter-example of Example~\ref{ex:weak-notstrong}, we can establish the following result:
\begin{corollary}
For~$M_S \in \mathbb{M}_{\mathcal{R}^S, \dom^S}$ and $(x^T, y^T) \in \targetdom$:
\begin{equation}
    \rho_s(M^S, (x^T, y^T)) \leq \rho_w(M^S, (x^T, y^T))
\end{equation}
\end{corollary}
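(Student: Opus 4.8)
The plan is to read off the inequality directly from the set-inclusion between the two families of admissible thresholds. For a fixed $M^S \in \mathbb{M}_{\mathcal{R}^S, \dom^S}$ and a fixed target case $(x^T, y^T) \in \targetdom$ (and a fixed target model space, which we suppress as in the rest of the paper), define
\begin{equation}
S_s = \left\lbrace \eta > 0 \ ; \ M^S \text{ is strongly } \eta\text{-reusable for } (x^T, y^T) \right\rbrace, \quad
S_w = \left\lbrace \eta > 0 \ ; \ M^S \text{ is weakly } \eta\text{-reusable for } (x^T, y^T) \right\rbrace .
\end{equation}
By Definition~\ref{def:weak-reusability} and Equation~\eqref{eq:degree-reusability}, $\rho_w(M^S, (x^T, y^T)) = \max S_w$, and likewise $\rho_s(M^S, (x^T, y^T)) = \max S_s$; by Proposition~\ref{prop:reusability-monotonicity} both sets are nonempty downward-closed intervals, so these maxima (equivalently, suprema) are well defined.

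The key step is the inclusion $S_s \subseteq S_w$. This is exactly the content of Proposition~\ref{prop:correlation-reusability}: if $\eta \in S_s$, i.e. $M^S$ is strongly $\eta$-reusable for $(x^T, y^T)$, then $M^S$ is weakly $\eta$-reusable for $(x^T, y^T)$, i.e. $\eta \in S_w$. Hence every threshold certified by strong reusability is also certified by weak reusability.

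From $S_s \subseteq S_w$ we get $\max S_s \leq \max S_w$, that is, $\rho_s(M^S, (x^T, y^T)) \leq \rho_w(M^S, (x^T, y^T))$, which is the claimed corollary. (If one prefers not to assume the maxima are attained, the same argument gives $\sup S_s \leq \sup S_w$, and Proposition~\ref{prop:reusability-monotonicity} identifies these suprema with the degrees of reusability.) I do not anticipate a genuine obstacle here: the statement is a packaging of Propositions~\ref{prop:reusability-monotonicity} and \ref{prop:correlation-reusability}, and the only point requiring a word of care is the well-definedness of the maxima, handled by the monotonicity proposition. The role of Example~\ref{ex:weak-notstrong} is merely to show the inequality can be strict, and it is not needed for the proof of the corollary itself.
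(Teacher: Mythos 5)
Your proof is correct and follows essentially the same route as the paper, which derives the corollary by combining Proposition~\ref{prop:reusability-monotonicity} (downward-closedness of the admissible thresholds, hence well-definedness of $\rho$) with Proposition~\ref{prop:correlation-reusability} (strong implies weak, i.e.\ your inclusion $S_s \subseteq S_w$). Your set-theoretic packaging is just a clean formalization of that same argument, and you are right that Example~\ref{ex:weak-notstrong} only serves to show the inequality can be strict.
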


\subsection{Transferable Cases}

The notion of  reusability we proposed in Definitions~\ref{def:weak-reusability} and~\ref{def:strong-reusability} are not directly applicable to measure transferability from a source observation to a target problem. The property of transferability measures the ability to transfer knowledge from a source case~$(x_S, y_S) \in \sourcedom$ to apply it to the target case~$(x^T, y^T) \in \targetdom$. It can be seen as an extension of reusability where the source model~$M^S$ is determined based on the source observation. 

\begin{definition}[Transferability]
\label{def:transferability}
Let $(x^S, y^S) \in \sourcedom$ be a source case. The case~$(x^S, y^S)$ is said to be strongly (resp. weakly) $\eta$-transferable to the target case~$(x^T, y^T) \in \targetdom$ if the set of compatible models $\lbrace M^S | K(M^S) + K(x^S, y^S | M_S) < K(x^S, y^S), M^S \in \mathbb{M}_{\mathcal{R}^S, \dom^S} \rbrace$ contains an element $M^{S*}$ such that $M^{S*}$ is strongly (resp. weakly) reusable for case~$(x^T, y^T)$. 
\end{definition}

The proposed definition of transferability is very weak, since it only requires the existence of one source model that is reusable. However, it does not take into account the quality of this model in the source domain. For instance, it would acquire equal weight if the reusable source model is associated with complexity $K(M^S) + K(x^S, y^S | M^S)$ close to $K(x^S, y^S)$, as if the complexity is close to 0. Such situations are very likely to occur with probabilistic models, since many such models give positive probability to all observations.

In order to refine this notion, it would be important to define a coefficient of transferability, similar to the degree of reusability~$\rho$ defined in Proposition~\ref{prop:reusability-monotonicity}. We will denote such a coefficient $\tau((x^S, y^S), (x^T, y^T))$. We propose two possible definitions below. These definitions rely on the set of compatible models for $(x^S, y^S)$ introduced in Definition~\ref{def:transferability}, and denoted~$\mathbb{M}(x^S, y^S)$. For simplicity, we will use the notation $\mathcal{C}^S$ (resp. $\mathcal{C}^T$) to designate the case $(x^S, y^S)$ (resp. $(x^T, y^T)$). 

A first definition is a direct application of Definition~\ref{def:transferability}: it associates the transferability of a problem to the maximum reusability of the corresponding model:
\begin{equation}
\tau_{\text{max}}\left(\mathcal{C}^S, \mathcal{C}^T\right) = \max_{M^S \in \mathbb{M}(\mathcal{C}^S)} \rho(M^S, \mathcal{C}^T) 
\label{eq:transferability-max}
\end{equation}
with the convention that $\max \emptyset = 0$. 
This definition has the same weakness as the introduced concept of transferability: it does not take into account that the most reusable model might also be very weak to describe $(x^S, y^S)$. 

In order to alleviate this, our second definition proposes to average the score over the possible models. Therefore, we compute the posterior of the model for given~$(x^S, y^S)$ and compute the score as the expected value over~$M_S$ of the degree of reusability. 
\begin{equation}
\tau_{\text{avg}}\left(\mathcal{C}^S, \mathcal{C}^T\right) = \sum_{M^S \in \mathbb{M}(\mathcal{C}^S)} p(M^S | x^S, y^S) \rho(M^S, \mathcal{C}^T) 
\label{eq:transferability-avg}
\end{equation}
In order to compute the posterior~$p(M^S | x^S, y^S)$, we use algorithmic probability~\cite{li2008introduction} and assume a uniform prior:
\begin{equation}
    p(M^S | x^S, y^S) = 2^{- K(x^S, y^S | M^S) - K(M^S)}
\end{equation}
This second definition provides a better idea of how transferable a source observation can be, since it takes into account all possible models describing it. However, it has two major weaknesses. On a theoretical level, it does not reflect the variance of the reusability degree over the compatible models: based on $\rho_{\text{avg}}$ only, it is impossible to know whether all models, only very few but very compatible models, or a large number of less compatibles are reusable. On a practical level, the $\tau_{\text{avg}}$ score may not be tractable, and would require some approximations.




\section{Illustration: Transferability of Morphological Transformations}
\label{sec:morphological-analogies}

In this section, we propose to apply the notions introduced in Section~\ref{sec:transferability} to the case of morphological analogies. We will mostly build upon the domain introduced in Example~\ref{ex:morphology-model}. 

\subsection{Introduction to Morphological Analogies}

Morphological analogies are analogies on words involving transformations of a morphological nature, for instance declension or conjugation. Unlike semantic analogies (e.g. ``king is to queen as man is to woman"), morphological analogies are mostly of a symbolic nature: they involve the detection of transformations of one form of a word into another form. Typical example of such morphological analogies could be ``work : worked :: call : called" in English (transformation from present to preterit, in English), or ``voihin : vuossa :: soihin : suossa" (transformation from illative plural to inessive singular in Finnish). 

Various works have been proposed to solve such analogies, mostly based on the principles of proportional analogy~\cite{Lepage01}. Other approaches rely on an algebraic consequence of these principles~\cite{langlais2009improvements}, on deep learning approaches~\cite{alsaidia} or even on complexity minimization~\cite{murena2020solving}.

\subsection{A Simplified Model for Morphology}

The model proposed in Example~\ref{ex:morphology-model} is relevant for a formal treatment of morphological analogies. However, following Murena et al.~\cite{murena2020solving}, we propose a simplification where the space of partial recursive functions is restricted to a simpler subset. This subset is defined by a simple descriptive language based on the concatenation of various strings. It allows to define functions of the form $\phi = (\phi_1, \phi_2)$ with, for $i \in \lbrace 1, 2 \rbrace$:
\begin{equation}
    \phi_i(r_1, \dotsc, r_n) = w_0^i + \sum_{k=1}^{K_i} r_{\sigma_i(k)} + w_k^i
\end{equation}
where the + operation stands for string concatenation, $r_1, \dotsc, r_n \in \mathcal{A}^*$ and $w_0^i, \dotsc, w_K^i \in \mathcal{A}^*$ are words on the alphabet~$\mathcal{A}$, and $\sigma_i: \lbrace 0, \dotsc, K_i \rbrace \rightarrow \lbrace 0, \dotsc, n \rbrace$. The vector $r = (r_1, \dotsc, r_n)$ corresponds to the representation, and the models are defined such as in Equation~\eqref{eq:morpho-model}. 

We notice that the proposed restriction accounts for various types of morphological transformations, such as prefixation, suffixation, change or prefix and/or suffix, but also duplication. However, the language is not Turing complete, and for instance does not cover any conditional statement. 

In morphological analogies, the source and target domains are the same.

\subsection{Computing Complexities}
\label{sec:morphology-complexities}

In order to compute the reusability and transferability, we must define three expressions of complexity: complexity of a model~$K(M)$, complexity of a case given a model~$K(x, y | M)$ and complexity of a target model given a source model~$K(M^T | M^S)$. 

A model is entirely defined by the function~$\phi$. A binary coding of such functions is proposed in~\cite{murena2020solving}. The complexity of the model then corresponds to the length (in bits) of the binary code of the function. 

Given a model~$M$, the case~$(x, y)$ is coded by providing a correct representation~$(r_1, \dotsc, r_n)$. In case no representation generates~$(x,y)$ with model~$M$, the two words are hard-coded. In terms of binary representation, we propose the following coding. The string starts with a bit encoding whether the following bits code for the representation vector or for the two words. After this bit, the words are coded letter by letter, with specific delimitors to mark the end of each string. The complexity~$K(x, y | M)$ is the length of this code. 

For the model transfer, we propose an elementary description. More sophisticated versions have to be discussed in future works. We propose to reuse the source model either by reusing it directly (without modification), or by redefining completely. Such as previously, this choice is indicated by an initial bit. The model complexity is then given by:
\begin{equation}
    K(M^T | M^S) = \begin{cases} 
        1 & \text{ if } M^T = M^S \\
        1 + K(M^T) & \text{ otherwise}
    \end{cases}
\end{equation}

\subsection{Examples of Reusability}

\paragraph{Suffixation.} We consider the source model $M^S$ associated to transformation~$\phi(r_1) = (r_1, r_1 + ``s")$ which consists in suffixing an ``s" at the end of a word. 
We can verify that $M^S$ is $\eta$-reusable for the target case (``film", ``films"). 
In that case, it can be verified that $M^S$ minimizes quantity $K(M) + K(x^T, y^T | M)$.
Consequently, $M^S$ is weakly $\eta$-reusable for (``film", ``films") with $\eta \leq K(M^S) - 1$. This shows that $\rho_w(M^S, (\text{``film", ``films"})) = K(M^S) - 1$. 
It can also be verified that $M^S$ is the \emph{only} model minimizing $K(M) + K(x^T, y^T | M)$. Therefore, we also have that $M^S$ is strongly $\eta$-reusable for (``film", ``films") with $\eta \leq K(M^S) - 1$. In this case, we have $\rho_s(M^S, (\text{``film", ``films"})) = \rho_w(M^S, (\text{``film", ``films"}))$. 

However, the model is not reusable for (``mouse", ``mice") for instance, since the minimal transformation for this case is $\phi^\prime(r) = (r + \text{``ouse"}, r + ``ice")$. We then have $\rho_s(M^S, (\text{``mouse", ``mice"})) = \rho_w(M^S, (\text{``mouse", ``mice"}))$.

\paragraph{Duplication.} We consider the source model $M^S$ associated to transformation~$\phi(r_1) = (r_1 + ``-" + r_1, r_1)$. This transformation is the reverse of a duplication (plural form in Indonesian). As for previous example, one can easily check that $M^S$ is not reusable for the case (``orang", ``orang-orang"). This was expected, with the choice of the transfer representation~$K(M^T | M^S)$ which forces toward reusing the source model or ignoring it completely. It would not have been the same with other choices though. In particular, if it allowed for model transformation of the form $(\phi_1, \phi_2) \mapsto (\phi_2, \phi_1)$.

\section{Toward Transferability in Domain Adaptation}
\label{sec:uda}

Domain adaptation is a machine learning task where a hypothesis on a source domain has to be transferred to a target domain where data are not equally distributed~\cite{farahani2021brief}. We conclude this paper with a quick investigation of how domain adaptation can fit to our proposed framework.  

\subsection{Related Works: Task-Relatedness}

The question of transferability of one solved source problem to a target problem has played a predominant role in the theoretical understanding of domain adaptation. Ben-David et al. (2010) \cite{ben2010theory} propose a PAC bound for transfer between domains in a binary classification setting, which relies on the use of a measure of a specific \emph{domain divergence}, called $\mathcal{H}$-divergence. It is noticeable that the introduced measure depends on the hypothesis class~$\mathcal{H}$, i.e. on the model space. This proposed notion has then been refined, to account for a variety of loss functions~\cite{mansour2009domain} or to adapt to the PAC-Bayesian setting~\cite{germain2013pac}. All these measures follow a similar idea of comparing the distribution over the input spaces, but ignore the labels. Closer to our proposal, Zhang et al. (2012)~\cite{zhang2012generalization} propose to additionally take into account the label distribution~$p(y)$ in the discrepancy measure. 
Independently from these studies, Mahmud (2009)~\cite{mahmud2009universal} also proposed to quantify the task-relatedness using Kolmogorov complexity and Algorithmic Information Theory.

\subsection{Open Question: Complexities for Probabilistic Models}

We describe the domain adaptation task in the context of the probabilistic model space defined in Example~\ref{ex:probabilistic-model}, in which a model is associated to a probability distribution. Such as for the morphological domain (Section~\ref{sec:morphology-complexities}), we need to define the complexities $K(M)$, $K(x,y | M)$ and $K(M^T | M^S)$. 

The term~$K(x,y | M)$ is a standard quantity considered by Algorithmic Information Theory. It is commonly computed using the \emph{Shannon-Fano coding}. Using this allows to define the complexity of an object $(x,y)$ knowing a probability distribution $p$ as $K(x,y | p) = - \log p(x,y)$. This corresponds to the natural choice when computing $K(x,y | M)$ in our probabilistic domain. 

Defining $K(M)$ and $K(M^T | M^S)$ is more difficult thought and goes beyond the scope of this paper. Traditionally, the complexity of a probability distribution is assimilated to the probability of its density function, and specific computations are proposed to estimate these. We refer the interest the interested reader to~\cite{mahmud2009universal} for instance.

\subsection{Discussion}

Extending our framework to domain adaptation is both natural and complex. Indeed, the problem formulation in terms of models allow for a direct characterization of domain adaptation, where a domain is characterized by an unlabeled dataset (the problem) and a vector of predictions (the solution). However, the computation of the complexities may not be as simple as for the symbolic domain. Future works will have to bridge this gap.

\section{Conclusion}

In this paper, we introduced a novel understanding of analogical transfer, by focusing on whether the information contained in the source case were transferable to the target case. We proposed a formalism based on a notion of \emph{models}, which we defined in a way that is consistent with both symbolic analogies and numerical machine learning. Even though this preliminary works cover only the question of measuring the transferability from a source case $(x^S, y^S)$ to a target case $(x^T, y^T)$, it is a first step toward the key question of predicting whether knowledge of $(x^S, y^S)$ could be reused to find a solution~$y$ to problem $x^T$. We think the proposed framework could provide a common basis to develop a general understanding of transfer, going beyond the current statistical theories~\cite{ben2010theory} for instance.

\begin{acknowledgments}
  The author wishes to thank the anonymous reviewers for their insightful comments and suggestions. Some of the presented ideas have been inspired by discussions with Antoine Cornu\'ejols, Jean-Louis Dessalles, Marie Al-Ghossein and Miguel Couceiro. This work was supported by the Academy of Finland Flagship programme: Finnish Center for Artificial Intelligence, FCAI. 
\end{acknowledgments}

\bibliography{biblio}

\end{document}